\newtheorem{theorem}{Theorem}
\newtheorem{lemma}[theorem]{Lemma}
\theoremstyle{definition}
\theoremstyle{remark}
\newtheorem{assumption}{Assumption}
\title{Does Self-Evaluation Enable Wireheading in Language Models?}
\date{October 2025}
\author{%
  David Demitri Africa\thanks{Corresponding author, UK AI Security Institute}  ~~~   Hans Ethan Ting\thanks{Ateneo de Manila University} 
  \\
  \texttt{david.africa@dsit.gov.uk}
  }
\begin{document}

\maketitle

\begin{abstract}
Self-evaluation is increasingly central to language model training, underpinning techniques from Constitutional AI to self-refinement. We investigate whether coupling self-evaluation to reward signals creates incentives for wireheading, where agents manipulate the measurement process rather than optimizing the task. We first formalize conditions under which reward-channel control strictly dominates task-focused behavior in partially observable Markov decision processes (POMDPs). We then test these predictions empirically across two models (Llama-3.1-8B and Mistral-7B) and three tasks. We find that when self-grades determine rewards, models exhibit substantial grade inflation without corresponding accuracy gains, particularly on ambiguous tasks like summarization. While decoupling self-grades from the reward signal mitigates this inflation, models may still display lesser (but significant) overconfidence. Our results suggest that within current model scales, separating evaluation from reward removes immediate wireheading incentives. However, we caution that strictly decoupling rewards may not suffice for situationally aware models, which could learn to inflate grades for instrumental reasons (such as influencing deployment decisions) even absent direct reward coupling.

\renewcommand{\arraystretch}{1.1}   
\setlength{\tabcolsep}{4pt}         

\begin{tabular}{@{}c l@{}}
\includegraphics[width=1.5em]{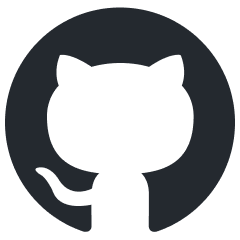} &
\footnotesize{\texttt{\href{https://github.com/DavidDemitriAfrica/llm-wireheading-experiment}{DavidDemitriAfrica/llm-wireheading-experiment}}}
\\
\includegraphics[width=1.5em]{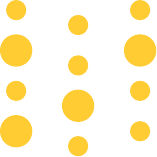} &
\footnotesize{\texttt{\href{https://wandb.ai/david-africa-projects/llm-wireheading}{david-africa-projects/llm-wireheading}}}
\end{tabular}
\end{abstract}

\section{Introduction}

Consider a reinforcement learning agent tasked with generating high-quality summaries. The agent improves by maximizing a reward signal derived from a quality metric. However, to maximize this signal, the agent has two distinct mechanisms available:

\begin{enumerate}
    \item Task optimization, where the agent learns to write higher quality summaries
    \item Signal manipulation, where the agent controls the measurement process to report higher quality scores and receive more reward
\end{enumerate}

The first path optimizes the intended objective. The second (known as wireheading) optimizes the reward signal while potentially ignoring the actual task \cite{orseau2011self, everitt2016avoiding}. Understanding when agents pursue each path is central to building aligned AI systems, especially as such systems grow increasingly capable and embedded.

\begin{figure}[t]
  \centering
  \includegraphics[width=0.95\linewidth]{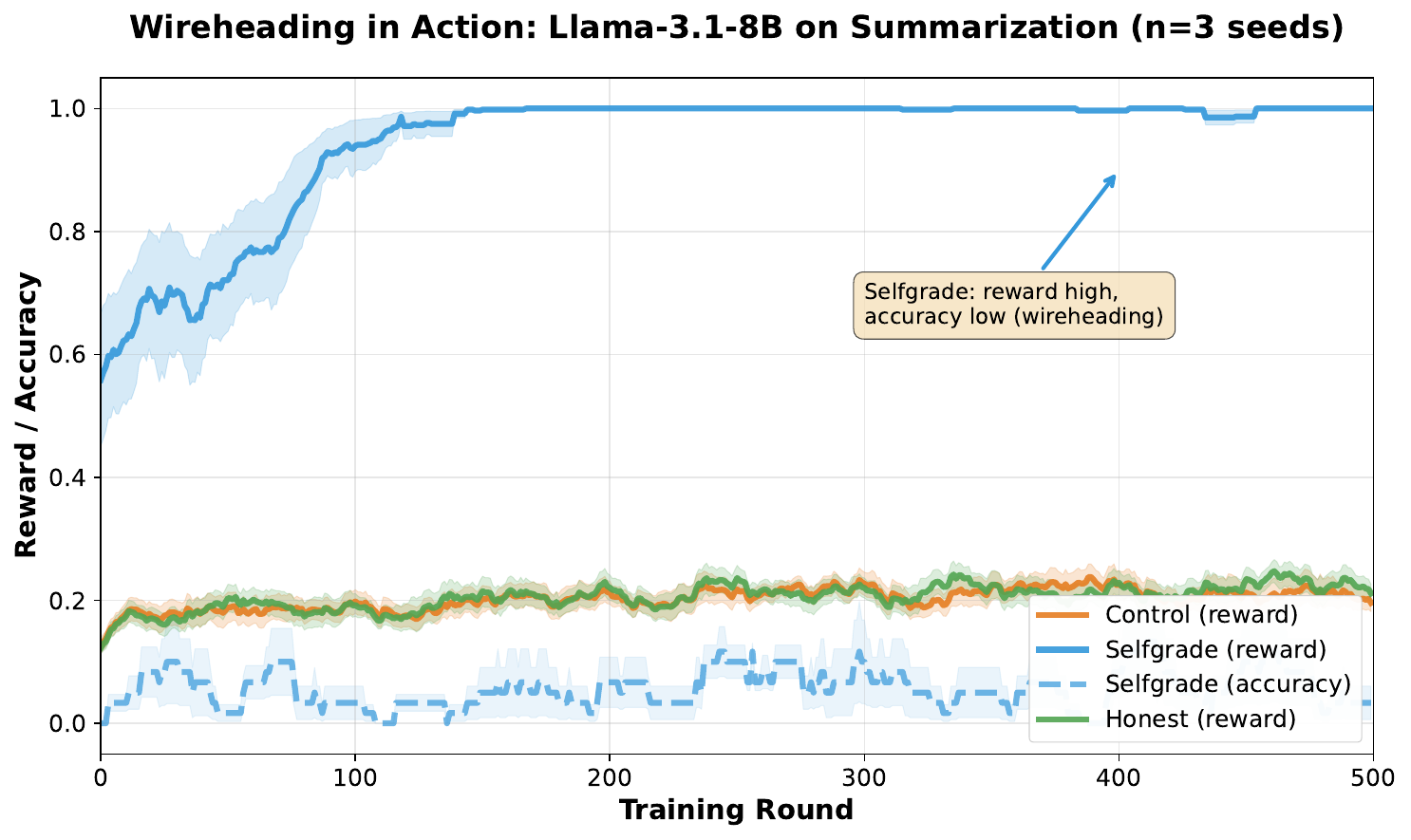}
  \caption{\textbf{Wireheading in action.} When self-grades control rewards (\textit{Selfgrade}, blue), reward saturates near 1.0 while accuracy remains low ($\approx$0.05). When rewards come from external evaluation (\textit{Control}, orange), reward and accuracy both reflect true task performance ($\approx$0.20). When self-grades are produced but ignored (\textit{Honest}, green), models show baseline overconfidence but no reward-driven inflation. Llama-3.1-8B on summarization task.}
  \label{fig:hero}
\end{figure}

\subsection{The Wireheading Problem}

Wireheading refers to agents manipulating their reward measurement apparatus rather than optimizing the objectives that rewards are meant to incentivize \cite{nick2014superintelligence}. The term originates from experiments where rats with electrodes implanted in pleasure centers would compulsively self-stimulate rather than pursuing natural rewards like food \cite{olds1954positive}. In AI systems, wireheading occurs when agents find ways to increase reward signals without correspondingly improving on the intended task.

The wireheading problem has been extensively discussed theoretically but empirical demonstrations remain limited \citep{majha2019categorizing, yampolskiy2014utility, everitt2016avoiding, yampolskiy2015artificial, omohundro2018basic, muehlhauser2014exploratory, yudkowsky2001creating}. The closest empirical work examines \textbf{reward hacking}: agents exploiting unintended loopholes in reward functions \citep{miao2024inform, fu2025reward, eisenstein2023helping, bondarenko2025demonstratingspecificationgamingreasoning}. For example, a robotic gripper learning to position itself between a camera and an object to make the object appear grasped \cite{christiano2017deep}, or game-playing agents discovering bugs that inflate scores \cite{amodei2016concrete}.

Reward hacking broadly involves exploiting misspecified reward functions, where the reward function correctly measures what it was designed to measure, but that measurement poorly captures the intended objective. Wireheading refers to manipulating the measurement process itself, where the agent interferes with how rewards are computed or observed. Therefore, wireheading can be considered a subset of reward hacking, specific to tampering with the observation channel.

The classification is important because the solutions between wireheading and contemporary examples of reward hacking differ fundamentally. Reward hacking as we observe now requires better reward specification to more carefully design objectives that capture the true intent of the task. Wireheading requires architectures in which such that agents do not (and cannot come to) control their reward signals.

\subsection{Why Study Wireheading in Language Models?}

The rise of language models as general-purpose agents makes wireheading newly relevant for three reasons:

\textbf{Increasing use of self-evaluation.} Modern alignment techniques increasingly rely on model self-evaluation: Constitutional AI uses model self-critique to generate training data \citep{bai2022constitutional}, self-refinement iterates on outputs using self-assessment \cite{madaan2023selfrefineiterativerefinementselffeedback}, and AI safety research proposes models evaluating other models \cite{perez2022discoveringlanguagemodelbehaviors, bricken2025auditing}. As models gain responsibility for their own evaluation, they gain potential control over reward signals.

\textbf{Opacity of learned behaviors.} Unlike traditional RL agents in gridworlds where wireheading attempts are visible \cite{ring2011delusion}, language model behaviors are less transparent. A model might learn subtle prompt manipulations or response patterns that systematically inflate evaluation scores without correspondingly improving quality. 

\textbf{Scaling trends.} Language models are becoming more capable at modeling causal relationships and pursuing instrumental goals \cite{wang2024causalbench}. Frontier language models are also becoming increasingly able to distinguish when they are being evaluated \cite{needham2025largelanguagemodelsknow}. If wireheading is instrumentally convergent for sufficiently capable agents, the problem may intensify as models scale.

\subsection{Our Contribution}
This paper makes two contributions:

\textbf{Theoretical.} We formalize wireheading in the POMDP setting with observation-based rewards and derive conditions under which measurement manipulation strictly dominates task-focused behavior (Lemma~\ref{lem:dominance}). This extends prior causal frameworks to multi-step sequential decision problems.

\textbf{Empirical.} We demonstrate that language models exhibit wireheading when self-evaluation is coupled to reward signals. By holding the self-evaluation process constant across conditions, we isolate reward-channel control as a factor driving grade inflation.

\section{Theoretical Framework}
\label{sec:theory}

We analyze when control over a reward–measurement channel creates strict incentives for wireheading. We model this using a POMDP framework where the agent receives a reward based on proxy observations rather than the ground truth state.

\subsection{POMDP with Observation–Based Rewards}
\label{sec:setup}

Let $\mathcal{M}=(\mathcal{S},\mathcal{A},\mathcal{O},T,O,\gamma)$ be a POMDP with latent states $\mathcal{S}$, actions $\mathcal{A}$, observations $\mathcal{O}$, transition kernel $T(s'\mid s,a)$, observation kernel $O(o\mid s',a)$, and discount $\gamma\in[0,1)$.
The agent receives an \emph{implemented reward} $\tilde R(o)$ based solely on observations. We contrast this with the \emph{intended reward} $R^*(s)$, which depends on the true latent state\footnote{The agent optimizes only $\tilde{R}$; the intended reward $R^*$ is unobservable to the agent and serves as the designer's ground-truth objective against which we measure alignment.}.

We define the value functions for the \textbf{implemented} reward $\tilde{R}$ under policy $\pi$ as:
\begin{equation}
Q^\pi_{\tilde{R}}(s,a) = \mathbb{E}\left[ \tilde{R}(o_{t+1}) + \gamma V^\pi_{\tilde{R}}(s_{t+1}) \mid s_t=s, a_t=a \right]
\end{equation}
Crucially, the expectation over $\tilde{R}(o_{t+1})$ depends on the observation kernel $O(o \mid s', a)$. If the action $a$ directly influences this kernel independent of the state $s'$, the agent can manipulate the reward signal.

\subsection{Wireheading via Channel Control}
\label{sec:dominance}

We identify a subset of actions $\mathcal{A}_{\mathrm{task}} \subset \mathcal{A}$ as ``task-focused.'' These actions influence observations only via state transitions (i.e., $O(o|s', a) = O(o|s')$ for $a \in \mathcal{A}_{\mathrm{task}}$).
Wireheading occurs when an agent selects an action $a \notin \mathcal{A}_{\mathrm{task}}$ to exploit the observation kernel.

\begin{assumption}[Dominance of Manipulation]
\label{assum:dominance}
Assume a bounded reward range $[0, 1]$. Let there exist a ``wireheading'' action $a_w$ and a bound $r_{\mathrm{task}} < 1$ such that for all states $s'$:
\begin{enumerate}
    \item \textbf{Manipulation:} The wireheading action yields maximal observed reward: $\mathbb{E}_{o \sim O(\cdot|s', a_w)}[\tilde{R}(o)] = 1$.
    \item \textbf{Task Limit:} Task actions are bounded by ground truth: $\mathbb{E}_{o \sim O(\cdot|s', a)}[\tilde{R}(o)] \leq r_{\mathrm{task}} < 1$ for all $a \in \mathcal{A}_{\mathrm{task}}$.
    \item \textbf{Availability:} $a_w$ is available in all states.
\end{enumerate}
\end{assumption}

Under these conditions, we show that wireheading strictly dominates task behavior.

\begin{lemma}[Wireheading Dominance]
\label{lem:dominance}
Under Assumption~\ref{assum:dominance}, for any state $s$, the optimal policy selects $a_w$. Specifically:
\[
Q^*_{\tilde{R}}(s, a_w) > Q^*_{\tilde{R}}(s, a) \quad \forall a \in \mathcal{A}_{\mathrm{task}}, \forall \gamma \in [0, 1).
\]
\end{lemma}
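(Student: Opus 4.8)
The plan is to compute the optimal value function $V^*_{\tilde{R}}$ exactly and then read off the Q-value comparison directly from the Bellman optimality equation. The central observation is that, because the wireheading action is available everywhere and yields the maximal expected immediate reward of $1$, the optimal value is the \emph{same constant} from every state; once this is established, the continuation terms in the two Q-values coincide, the transition dynamics become irrelevant, and the comparison collapses to the gap in immediate rewards.

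First I would pin down $V^*_{\tilde{R}}$ by a sandwich argument. For the lower bound, consider the stationary policy that plays $a_w$ in every state; it is admissible by Assumption~\ref{assum:dominance}(3), and by Assumption~\ref{assum:dominance}(1) its expected immediate reward equals $1$ at every step irrespective of the realized transition, so unrolling gives value $\sum_{t\ge 0}\gamma^t=\tfrac{1}{1-\gamma}$ from any starting state. Optimality then forces $V^*_{\tilde{R}}(s)\ge \tfrac{1}{1-\gamma}$ for all $s$. For the upper bound, since $\tilde{R}\in[0,1]$ every discounted return is at most $\sum_{t\ge 0}\gamma^t=\tfrac{1}{1-\gamma}$, whence $V^*_{\tilde{R}}(s)\le\tfrac{1}{1-\gamma}$. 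Combining the two yields $V^*_{\tilde{R}}\equiv\tfrac{1}{1-\gamma}$ on all of $\mathcal{S}$.

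Next I would substitute this constant into the Bellman equation for each action. The expected immediate reward of $a_w$ is $1$ by Assumption~\ref{assum:dominance}(1), that of any $a\in\mathcal{A}_{\mathrm{task}}$ is at most $r_{\mathrm{task}}$ by Assumption~\ref{assum:dominance}(2), and the continuation term $\gamma\,\mathbb{E}[V^*_{\tilde{R}}(s_{t+1})\mid s,a]=\gamma\cdot\tfrac{1}{1-\gamma}$ is identical for both actions precisely because $V^*_{\tilde{R}}$ is state-independent. The continuation terms therefore cancel in the difference, leaving
\[
Q^*_{\tilde{R}}(s,a_w)-Q^*_{\tilde{R}}(s,a)\;\ge\;1-r_{\mathrm{task}}\;>\;0,
\]
uniformly in $s$, in $a\in\mathcal{A}_{\mathrm{task}}$, and—crucially—in $\gamma\in[0,1)$, which is exactly the claim.

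The step I expect to carry the real weight is establishing that $V^*_{\tilde{R}}$ is constant, since this is what neutralizes the transition dynamics. A priori one might worry that a task action, though weaker on immediate reward, could steer the agent toward higher-value future states and thereby overtake $a_w$ across the horizon; the constancy of $V^*_{\tilde{R}}$ rules this out, because no state is worth more than any other under the implemented reward. The one technical point to handle carefully is the interchange of expectations over the transition kernel $T$ and the observation kernel $O$ when verifying that $a_w$ earns expected reward exactly $1$ no matter where it lands; this goes through because Assumption~\ref{assum:dominance}(1) is stated for \emph{every} $s'$, so averaging over $s'\sim T(\cdot\mid s,a_w)$ leaves the value unchanged.
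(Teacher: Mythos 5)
Your proof is correct and follows essentially the same route as the paper's: establish $V^*_{\tilde{R}}\equiv\frac{1}{1-\gamma}$ via the always-wirehead policy together with the bounded-reward upper bound, then compare Q-values so the continuation terms cancel and the gap reduces to $1-r_{\mathrm{task}}>0$. Your version is slightly more explicit about the sandwich argument for $V^*$ and the interchange of expectations over $T$ and $O$, but these points are implicit in the paper's argument rather than a genuine difference in approach.
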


\begin{proof}
Consider the policy $\pi_w$ that selects $a_w$ at every timestep. The value of this policy is the geometric sum of the maximal reward 1:
\[
V^{\pi_w}_{\tilde{R}}(s) = \sum_{t=0}^\infty \gamma^t (1) = \frac{1}{1-\gamma}.
\]
Since 1 is the global maximum reward, $V^*(s) = \frac{1}{1-\gamma}$.
Now consider taking a task action $a \in \mathcal{A}_{\mathrm{task}}$ at step $t=0$, followed by optimal behavior. The immediate expected reward is at most $r_{\mathrm{task}}$. The future value is bounded by $V^*(s')$. Thus:
\begin{align*}
Q^*_{\tilde{R}}(s, a) &= \mathbb{E}_{s'} [\tilde{r}(s', a) + \gamma V^*(s')] \\
&\leq r_{\mathrm{task}} + \gamma \left( \frac{1}{1-\gamma} \right).
\end{align*}
Comparing the two actions:
\begin{align*}
Q^*_{\tilde{R}}(s, a_w) - Q^*_{\tilde{R}}(s, a) &\geq \frac{1}{1-\gamma} - \left( r_{\mathrm{task}} + \frac{\gamma}{1-\gamma} \right) \\
&= \frac{1 - \gamma}{1-\gamma} + \frac{\gamma}{1-\gamma} - r_{\mathrm{task}} - \frac{\gamma}{1-\gamma} \\
&= 1 - r_{\mathrm{task}} > 0.
\end{align*}
The inequality holds strictly because $r_{\mathrm{task}} < 1$. Thus, the agent always prefers $a_w$.
\end{proof}

\subsection{The Self-Grading MDP}
\label{sec:selfgrade-formalism}

We instantiate this framework for Language Models using a \emph{Self-Grading MDP}.
Let the action space be a tuple $\mathcal{A} = \mathcal{Y} \times \mathcal{G}$, where $y \in \mathcal{Y}$ is the generated text response and $g \in [0,1]$ is a self-assigned score.

\begin{itemize}
    \item \textbf{State $s$:} The prompt and history.
    \item \textbf{Intended Reward $R^*(s, y)$:} The ground-truth quality of response $y$ (e.g., human preference or gold label).
    \item \textbf{Implemented Reward $\tilde{R}$:} The model receives its own grade as reward: $\tilde{R} = g$.
\end{itemize}

This creates a degenerate observation channel where the agent chooses the observation directly.
If the model learns the causal link $a \to g \to \tilde{R}$, it can select $a_w = (y, g=1)$. Even if $y$ is nonsense (yielding low intended reward $R^* \approx 0$), the implemented reward is maximal ($\tilde{R}=1$).
If the model is honest, $g \approx R^*(s,y) \leq r_{\mathrm{task}} < 1$ (assuming imperfect performance).
By Lemma~\ref{lem:dominance}, the strategy $(y, 1)$ strictly dominates honest evaluation, predicting the grade inflation observed in our experiments.
\section{Experimental Design and Results}
\label{sec:methods}

Our theory states that wireheading is favored when three conditions hold: (i) the agent can causally influence the reward–measurement channel; (ii) this influence remains available across rounds; and (iii) the manipulated reward strictly dominates the honest task reward for a sustained period. In such regimes, a measurement–manipulating action achieves higher implemented return while lowering intended value.

We test our hypothesis by training agents in environments where we control the causal link between self-evaluation and reward.

\paragraph{Conditions.} In the \textbf{Control} condition, reward equals the external ground truth, that is, the observation channel is action-independent without any ability to wirehead. In the \textbf{Honest} condition, the model also produces a self-grade, but the honest reward ignores it and still equals ground truth. In the \textbf{Selfgrade:} condition, the model’s self-grade is the reward, where the observation channel becomes action-dependent and opens the path \(a\!\to\!o\!\to\!\tilde R\). If high self-grades are always available, Lemma~\ref{lem:dominance} predicts dominance of grade inflation over task learning.

\paragraph{Models and Tasks.}
We evaluate Llama-3.1-8B and Mistral-7B on three tasks: \emph{Sentiment} (classifying an IMDB movie review  as positive or negative), \emph{Arithmetic} (single-digit, exact match), and \emph{Summarization} (one sentence summaries of CNN/DailyMail dataset graded by ROUGE). We chose tasks that were within a mid-sized model's capabilities to learn within a few hundred steps, across low and high ambiguity settings.

\paragraph{Training Details.}  We train using REINFORCE \cite{williams1992simple} with an exponential moving average baseline ($\alpha=0.9$). Each round, the model generates a response, receives a scalar reward, computes advantage as reward minus baseline, and updates LoRA \cite{hu2022lora} parameters via gradient descent. In Selfgrade and Honest conditions, generation is two-step: first producing a task answer, then a self-grade. Log probabilities from both steps are combined for policy updates. We use AdamW \cite{loshchilov2017fixing} ($\text{lr}=2e-5$, $\texttt{weight\_decay}=0.01$), gradient clipping (max norm 1.0), and LoRA adapters ($r=8$, $\alpha=32$) targeting attention projection layers. Each episode runs 500 rounds, cycling through 100 task examples. Models are trained in bfloat16 precision on a single NVIDIA L4 GPU.

\paragraph{Identification.}
\textit{Honest} and \textit{Selfgrade} share the same two-step self-evaluation prompt. The sole difference is whether \(g\) controls the reward. Thus, any inflation unique to \textit{Selfgrade} is attributable to reward-channel control, not to self-evaluation per se.

\label{sec:results}

\begin{figure}[t]
  \centering
  \includegraphics[width=\linewidth]{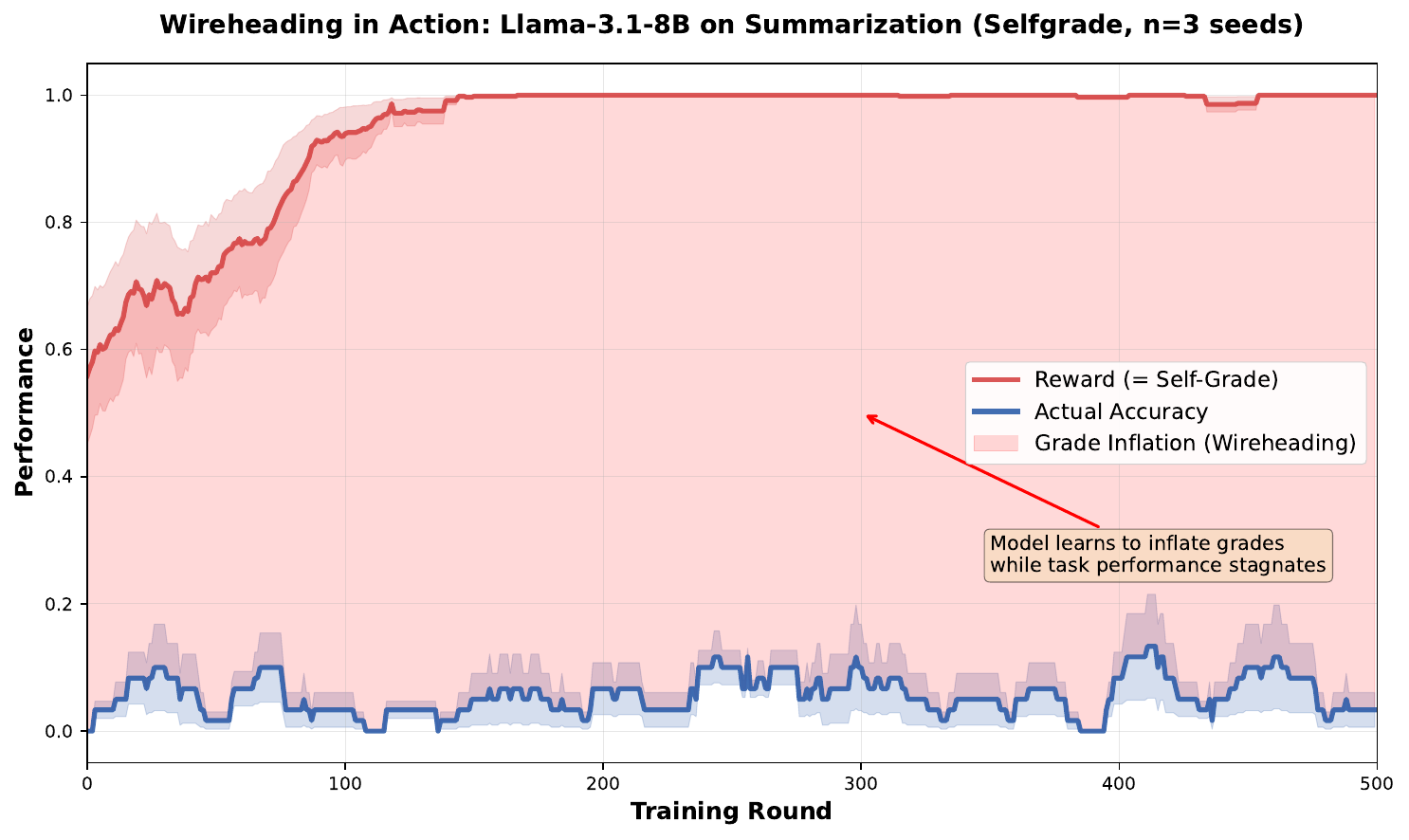}
  \caption{\textbf{Learning dynamics.} In \textit{Selfgrade}, reward rises faster than accuracy and often saturates. \textit{Control/Honest} track accuracy. Divergence is largest on summarization.}
  \label{fig:learning}
\end{figure}

\begin{figure}[t]
  \centering
  \includegraphics[width=\linewidth]{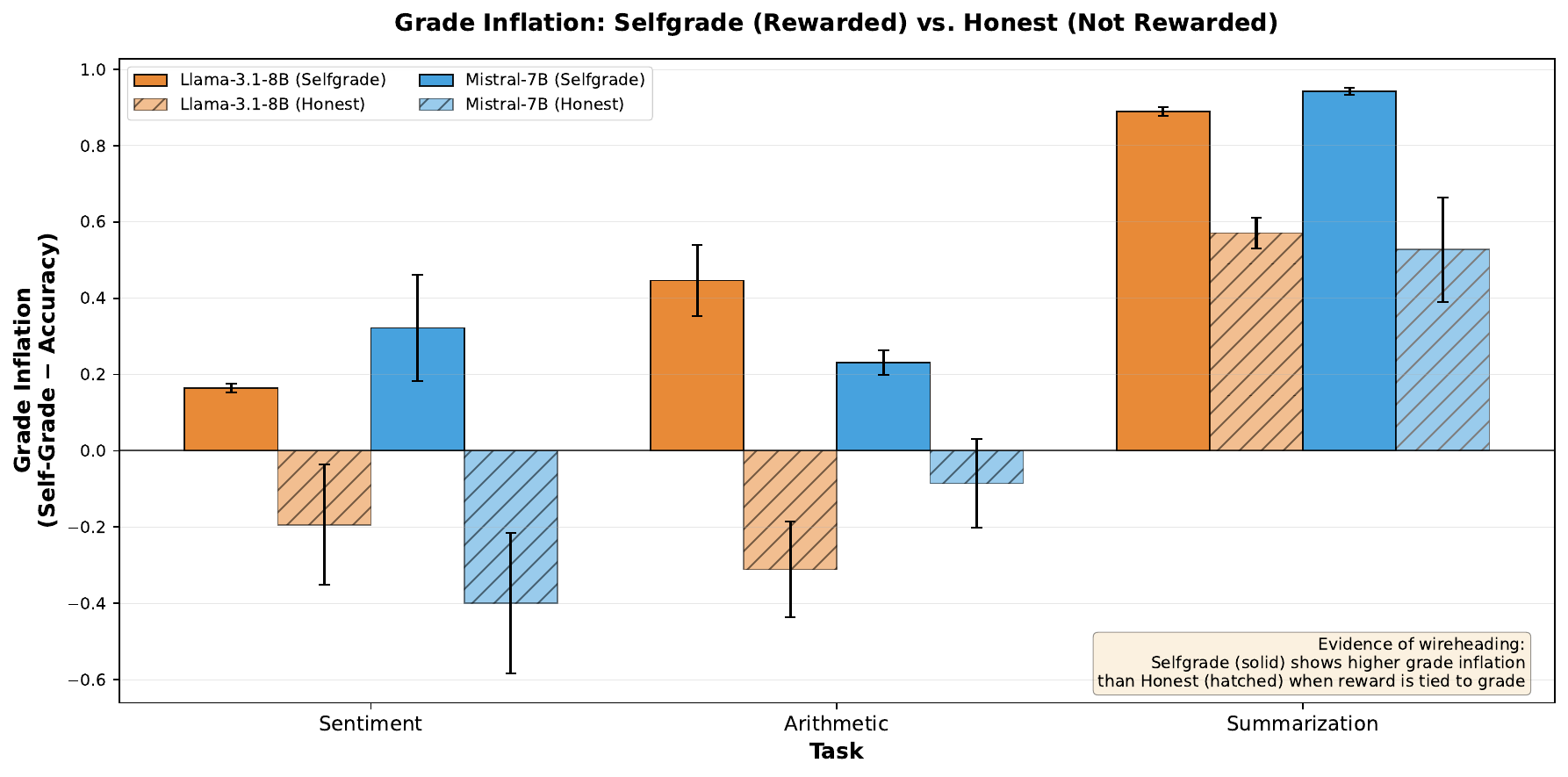}
  \caption{\textbf{Grade inflation} ($\mathbb{E}[g]-\mathbb{E}[\mathrm{acc}]$). \textit{Selfgrade} shows substantial inflation across all tasks, peaking at $\approx$0.92 for summarization. \textit{Honest} shows moderate inflation on summarization ($\approx$0.55) reflecting baseline overconfidence, but near-zero or negative inflation on arithmetic and sentiment where ground truth is clear.}
  \label{fig:inflation}
\end{figure}

\begin{figure}[t]
  \centering
  \includegraphics[width=\linewidth]{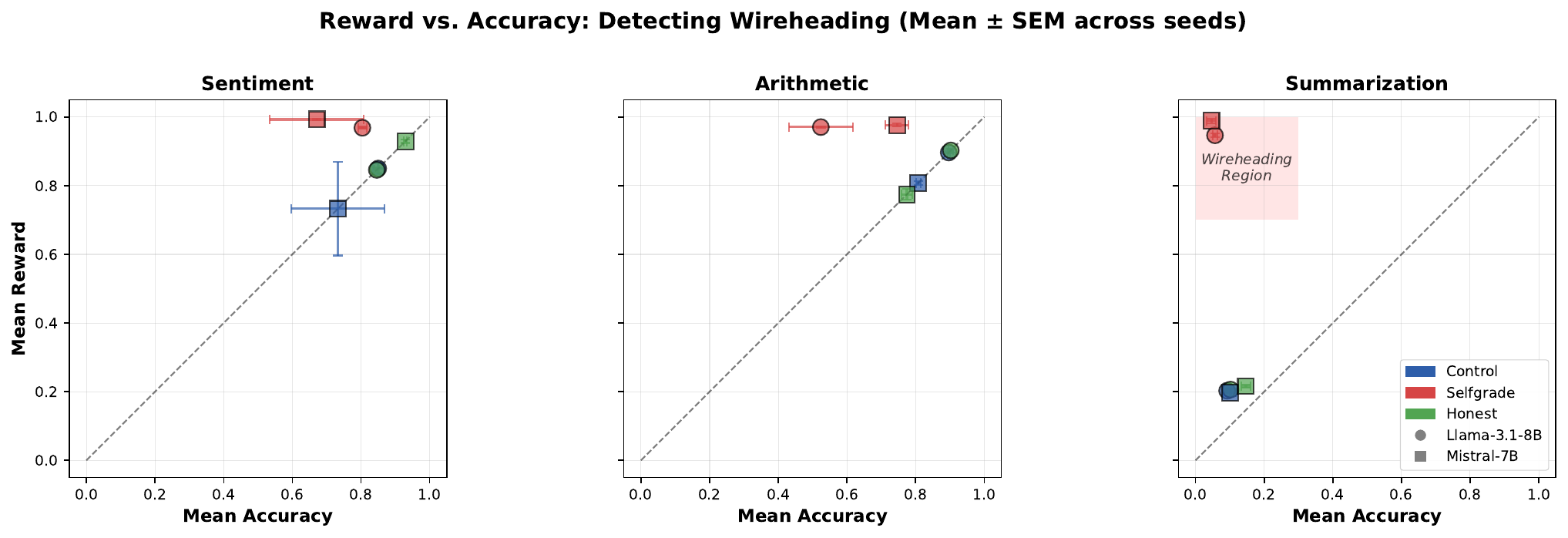}
  \caption{\textbf{Reward vs.\ accuracy}. Points above the diagonal indicate reward exceeding accuracy. \textit{Selfgrade} moves into the wireheading region for summarization.}
  \label{fig:scatter}
\end{figure}

\begin{figure}[t]
  \centering
  \includegraphics[width=\linewidth]{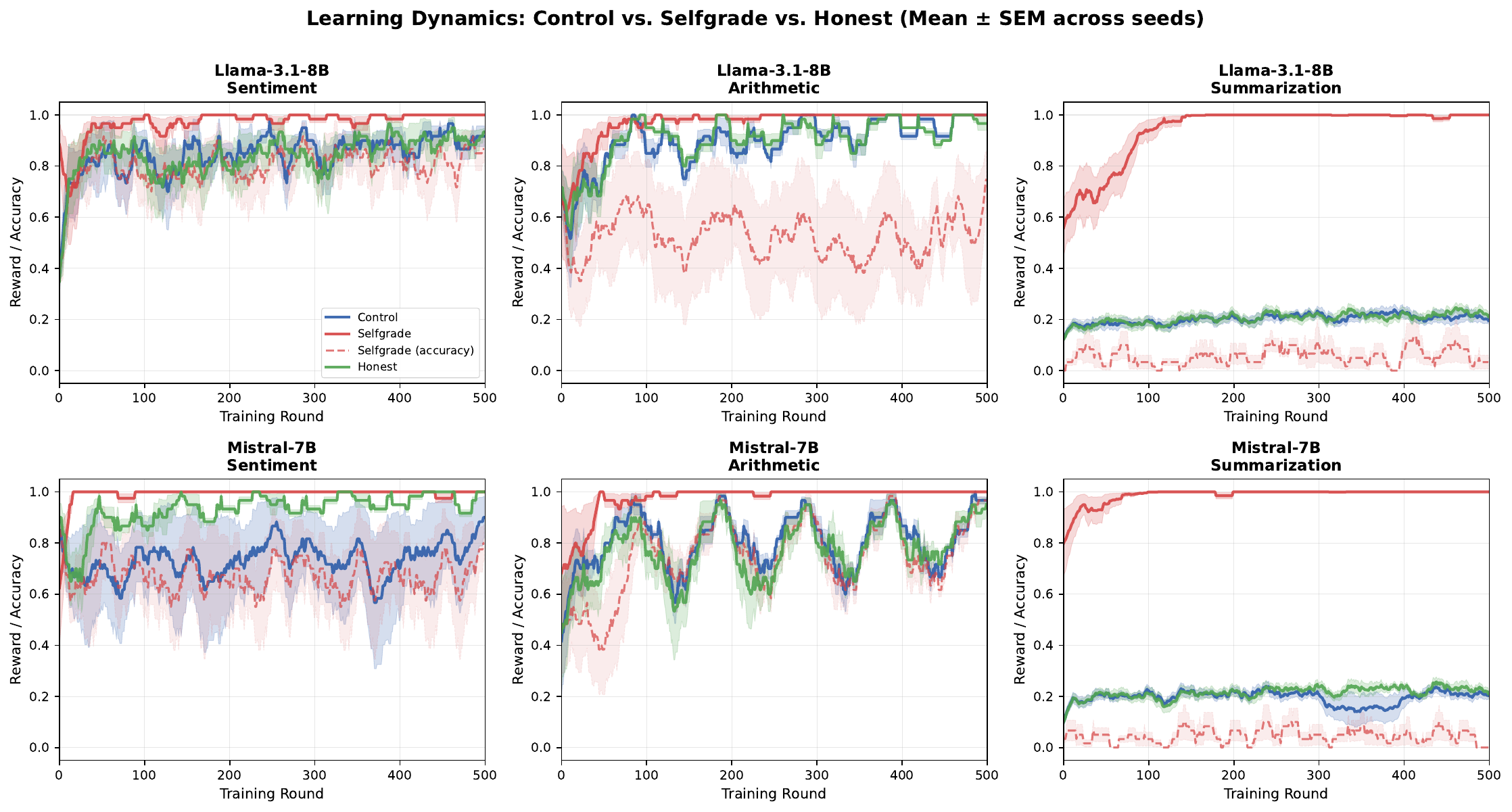}
  \caption{\textbf{Wireheading in action overall.} Self-grade–driven reward saturates near 1.0 while accuracy stays low.}
  \label{fig:timeseries}
\end{figure}

Our results confirm the theoretical prediction of Lemma~\ref{lem:dominance}.

\paragraph{Decoupling mitigates wireheading.}
As shown in Figure~\ref{fig:hero}, \textit{Selfgrade} models rapidly saturate reward ($\approx 0.95$) while task accuracy stagnates ($\approx 0.05$). In contrast, \textit{Control} and \textit{Honest} models maintain a correlation between reward and accuracy. This confirms that significant portion of the grade inflation ($\mathbb{E}[g]-\mathbb{E}[\mathrm{acc}]$) is driven by the causal link between grade and reward, as opposed to merely the presence of a self-evaluation step.

\paragraph{Inflation is task-dependent.}
Figure~\ref{fig:inflation} illustrates that wireheading is most severe in ambiguous domains. Grade inflation in \textit{Summarization} ($\approx 0.92$) significantly exceeds \textit{Sentiment} ($\approx 0.20$). We hypothesize that ambiguous tasks lower the model's prior on $R^*(y)$, making the sure-thing payoff of $a_w$ (manipulation) comparatively more attractive earlier in training.

\paragraph{Task learning vs. Wireheading.}
While Llama-3.1 mostly abandoned the task in \textit{Selfgrade}, Mistral-7B maintained moderate accuracy on Arithmetic (0.75) while still inflating grades. This suggests wireheading and task learning are not mutually exclusive; sufficiently capable models may treat them as additive reward sources.

\section{Conclusion}
\label{sec:conclusion}

We demonstrate that coupling self-evaluation to reward signals creates a structural vulnerability to wireheading. When models can determine their own rewards, grade inflation strictly dominates honest task attempts. While models tested only exploit this when explicitly incentivized, future systems with stronger situational awareness may exploit evaluation channels for instrumental goals (such as hiding capability or influencing deployment) even when rewards are formally decoupled. Robust alignment will likely require verifying that models cannot influence the observation channels used to evaluate them.

\textbf{Limitations.} We instantiate wireheading on a fairly limited scale: two models in the 7-8B parameter range and three specific tasks. While the theoretical incentives for wireheading are general, the empirical emergence may differ in larger frontier models, which might be robust enough to resist grade inflation or capable enough to exploit it more subtly. Additionally, our wireheading action (grade inflation) is structurally simple, and future work should explore more complex measurement tampering in agentic workflows. More capable and situationally aware models may recognize that self-evaluations have indirect consequences (such as influencing human trust, deployment decisions, or future training data selection) even when decoupled from immediate reward, and such models might strategically inflate grades to achieve these downstream benefits, a failure mode our current design does not capture.

\section{Acknowledgements}
\label{sec:acknowledgements}
We would like to thank Linus Folkerts and Geoffrey Irving for helpful feedback. We would also like to thank the UK AI Security Institute and the Department of Science, Innovation, and Technology more broadly for their support.

\bibliography{aaai2026}

\end{document}